\documentclass[10pt, twoside]{article}
\usepackage[margin=1in]{geometry}

\pdfoutput=1






\usepackage[utf8]{inputenc} 
\usepackage[T1]{fontenc}    
\usepackage{hyperref}       
\usepackage{url}            
\usepackage{booktabs}       
\usepackage{amsfonts}       
\usepackage{nicefrac}       
\usepackage{microtype}      

\usepackage{microtype}
\usepackage{graphicx}
\usepackage{booktabs} 

\usepackage{rotating}
\usepackage{tabularx}

\usepackage{url}            
\usepackage{subfig}

\usepackage{amsfonts,amsmath,amssymb,amsthm,bm,bbm}

\usepackage{algorithm,algpseudocode}


\usepackage{dsfont,array}

\usepackage{appendix}



\def\*#1{\mathbf{#1}}
\def\_#1{\mathcal{#1}}
\def\-#1{\mathbb{#1}}

\usepackage{todonotes}

\usepackage{multirow}
\usepackage{multicol}
\usepackage{xspace}
\usepackage{mathtools}
\usepackage{placeins}
\usepackage{paralist}

\newtheorem{proposition}{Proposition}
\newtheorem{remark}{Remark}

\DeclarePairedDelimiterX\Basics[1](){ #1}

\newcommand{\norm}[1]{\left\lVert#1\right\rVert}

\newcounter{const-no}

\DeclareMathOperator*{\argmin}{\arg\!\min}

\def\elec{{\sf electricity}\xspace}
\def\traffic{{\sf traffic}\xspace}
\def\wiki{{\sf wiki}\xspace}
\def\pems{{\sf PeMSD7(M)}\xspace}

\def\Ytr{\*Y^{\text{(tr)}}\xspace}
\def\Yte{\*Y^{\text{(te)}}\xspace}
\def\predYte{\hat{\*Y}^{\text{(te)}}\xspace}
\def\Xtr{\*X^{\text{(tr)}}\xspace}
\def\Xte{\*X^{\text{(te)}}\xspace}
\def\predXte{\hat{\*X}^{\text{(te)}}\xspace}

\usepackage{titlesec}
\setlength{\parskip}{0.4em}
\titlespacing\section{0pt}{\parskip - 0.4em}{\parskip - 0.3em}
\titlespacing\subsection{0pt}{\parskip - 0.4em}{\parskip - 0.5em}
\titlespacing\paragraph{0pt}{\parskip - 0.5em}{0pt}

\newcommand{\rbr}[1]{\left({#1}\right)}

\newcommand{\cbr}[1]{\left\{{#1}\right\}}

\newcommand{\cT}{\mathcal{T}}
\newcommand{\cJ}{\mathcal{J}}
\newcommand{\cI}{\mathcal{I}}
\newcommand{\cL}{\mathcal{L}}
\newcommand{\cR}{\mathcal{R}}


\usepackage{hyperref}
\usepackage{multirow}
\usepackage{authblk}

\newcommand{\dln}{{\sf LeveledInit}\xspace}
\newcommand{\dlnprev}{{\sf DLN} \xspace}
\newcommand{\golf}{{\sf DeepGLO}\xspace}
\newcommand{\MFLN}{{\sf TCN-MF}\xspace}
\newcommand{\trmf}{{\sf TRMF}\xspace}
\newcommand{\deepar}{{\sf DeepAR}\xspace}



\graphicspath{{figs/}}


\date{\today}
\title{Think Globally, Act Locally: A Deep Neural Network Approach to High-Dimensional Time Series Forecasting}

\author[1]{Rajat Sen}
\author[1]{Hsiang-Fu Yu}
\author[2]{Inderjit Dhillon}
\affil[1]{Amazon}
\affil[2]{Amazon and UT Austin}

\usepackage[parfill]{parskip}

\begin{document}



\maketitle





\begin{abstract}
Forecasting high-dimensional time series plays a crucial role in many
applications such as demand forecasting and financial predictions. Modern
datasets can have millions of correlated time-series that evolve
together, i.e they are extremely high dimensional (one dimension for each
individual time-series). There is a need for exploiting global patterns
and coupling them with local calibration for better prediction.
However, most recent deep learning approaches in the literature are
one-dimensional, i.e, even though they are trained on the whole dataset, during
prediction, the future forecast for a single dimension mainly depends on
past values from the same dimension.
In this paper, we seek to correct this deficiency and propose
\golf, a deep forecasting model which \emph{thinks globally and acts locally}.
In particular, \golf is a hybrid model that combines a \emph{global} matrix factorization model 
regularized by a temporal convolution network, along with another temporal network that can capture \emph{local} properties 
of each time-series and associated covariates. Our model can be trained effectively on high-dimensional but
diverse time series, where different time series can have vastly different
scales, \emph{without} a priori normalization or rescaling. Empirical results
demonstrate that \golf  can outperform state-of-the-art approaches; for example, we see more than $25$\% improvement in WAPE over other
methods on a public dataset that contains more than 100K-dimensional time series.
\end{abstract}

\section{Introduction}
\label{sec:intro}

Time-series forecasting is an important problem with many industrial
applications like retail demand forecasting~\cite{seeger2016bayesian},
financial predictions~\cite{kim2003financial}, predicting traffic or weather
patterns~\cite{chatfield2000time}. In general it plays a key role in
automating business processes~\cite{larson2001designing}. Modern data-sets can
have millions of correlated time-series over several thousand time-points. For
instance, in an online shopping portal like Amazon or Walmart, one may be interested
in the future daily demands for all items in a category, where the number of
items may be in millions. This leads to a problem of forecasting $n$
time-series (one for each of the $n$ items), given past demands over $t$
time-steps. Such a time series data-set can be represented as a matrix
$\*Y \in \-R^{n \times t}$ and we are interested in the
\emph{high-dimensional} setting where $n$ can be of the order of millions.

Traditional time-series forecasting methods operate on individual time-series
or a small number of time-series at a time. These methods include the well
known AR, ARIMA, exponential smoothing~\cite{mckenzie1984general}, the
classical Box-Jenkins methodology~\cite{box1968some} and more generally the
linear state-space models~\cite{hyndman2008forecasting}. However, these
methods are not easily scalable to large data-sets with millions of
time-series, owing to the need for individual training. Moreover, they cannot
benefit from shared temporal patterns in the whole data-set while training and
prediction.

Deep networks have gained popularity in time-series forecasting recently, due
to their ability to model non-linear temporal patterns. Recurrent Neural
Networks (RNN's)~\cite{funahashi1993approximation} have been popular in
sequential modeling, however they suffer from the gradient vanishing/exploding
problems in training. Long Short Term Memory (LSTM)~\cite{gers1999learning}
networks alleviate that issue and have had great success in langulage modeling
and other seq-to-seq tasks~\cite{gers1999learning,sundermeyer2012lstm}.
Recently, deep time-series models have used LSTM blocks as internal
components~\cite{flunkert2017deepar,rangapuram2018deep}. Another popular
architecture, that is competitive with LSTM's and arguably easier to train is
temporal convolutions/causal convolutions popularized by the wavenet
model~\cite{van2016wavenet}. Temporal convolutions have been recently used in
time-series forecasting~\cite{borovykh2017conditional,bai2018empirical}. These
deep network based models can be trained on large time-series data-sets as a
whole, in mini-batches. However, they still have two important shortcomings.

Firstly, most of the above deep models are difficult to train on data-sets
that have wide \textit{variation in scales} of the individual time-series. For
instance in the item demand forecasting use-case, the demand for some popular
items may be orders of magnitude more than those of niche items. In such
data-sets, each time-series needs to be appropriately normalized in order for
training to succeed, and then the predictions need to be scaled back to the
original scale. The mode and parameters of normalization are difficult to
choose and can lead to different accuracies. For example,
in~\cite{flunkert2017deepar,rangapuram2018deep} each time-series is whitened
using the corresponding empirical mean and standard deviation, while
in~\cite{borovykh2017conditional} the time-series are scaled by the
corresponding value on the first time-point.

Secondly, even though these deep models are trained on the entire data-set,
during prediction the models only focus on \textit{local} past data i.e only
the past data of a time-series is used for predicting the future of that
time-series. However, in most datasets, \textit{global} properties may be
useful during prediction time. For instance, in stock market predictions, it
might be beneficial to look at the past values of Alphabet, Amazon's stock
prices as well, while predicting the stock price of Apple. Similarly, in
retail demand forecasting, past values of similar items can be leveraged while
predicting the future for a certain item. To this end,
in~\cite{lai2018modeling}, the authors propose a combination of 2D convolution
and recurrent connections, that can take in multiple time-series in the input
layer thus capturing global properties during prediction. However, this method
does not scale beyond a few thousand time-series, owing to the growing size of
the input layer. On the other end of the spectrum, \trmf~\cite{yu2016temporal}
is a temporally regularized matrix factorization model that can express all
the time-series as linear combinations of \textit{basis time-series}. These
basis time-series can capture \textit{global} patterns during prediction.
However, \trmf can only model linear temporal dependencies. Moreover, there can
be approximation errors due to the factorization, which can be interpreted as
a lack of \text{local focus} i.e the model only concentrates on the global
patterns during prediction.

In light of the above discussion, we aim to propose a deep learning model that
can \emph{think globally and act locally} i.e., leverage both \textit{local and
global} patterns during training and prediction, and also can be trained
reliably even when there are \emph{wide variations in scale}. The {\bf main
contributions} of this paper are as follows:

\begin{compactitem}
\item  In Section~\ref{sec:leveled}, we discuss issues with wide variations in scale among different time-series, 
and propose a simple initialization scheme, \dln for Temporal Convolution Networks (TCN) that enables training without apriori normalization. 

\item In Section~\ref{sec:trmf},  we present a matrix factorization model
regularized by a TCN (\MFLN), that can express each time-series as
linear combination of $k$ basis time-series, where $k$ is much less than the
number of time-series. Unlike \trmf, this model can capture non-linear
dependencies as the regularization and prediction is done using a temporal convolution trained concurrently and also is amicable to scalable mini-batch
training. This model can handle \textit{global} dependencies during
prediction.

\item In Section~\ref{sec:hybrid}, we propose \golf, a \textit{hybrid} model, where the predictions from our \emph{global} \MFLN model, is provided as covariates 
for a temporal convolution network, thereby enabling the final model to focus both on local per time-series properties as well as global dataset wide properties, while both training and prediction. 

\item In Section~\ref{sec:results}, we show that \golf
outperforms other benchmarks on four real world time-series data-sets,
including a public \wiki dataset which contains more than $110K$ dimensions of time
series. More details can be found in Tables~\ref{tab:data} and~\ref{tab:results}.

\end{compactitem}

\section{Related Work}
\label{sec:rwork}
The literature on time-series forecasting is vast and spans several decades. Here, we will mostly focus on recent deep learning approaches. For a comprehensive treatment of traditional methods, we refer the readers to~\cite{hyndman2008forecasting,mckenzie1984general,box1968some,lutkepohl2005new,hamilton1994time} and the references there in.

In recent years deep learning models have gained popularity in time-series forecasting. DeepAR~\cite{flunkert2017deepar} proposes a LSTM based model where parameters of Bayesian models for the future time-steps are predicted as a function of the corresponding hidden states of the LSTM. In~\cite{rangapuram2018deep}, the authors combine linear state space models with deep networks. In~\cite{wen2017multi}, the authors propose a time-series model where all history of a time-series is encoded using an LSTM block, and a multi horizon MLP decoder is used to decode the input into future forecasts. LSTNet~\cite{lai2018modeling} can leverage correlations between multiple time-series through a combination of 2D convolution and recurrent structures. However, it is difficult to scale this model beyond a few thousand time-series because of the growing size of the input layer. Temporal convolutions have been recently used for time-series forecasting~\cite{borovykh2017conditional}.

Matrix factorization with temporal regularization was first used in~\cite{wilson2008regularized} in the context of speech de-noising. A spatio-temporal deep model for traffic data has been proposed in~\cite{yu2017spatio}. Perhaps closest to our work is TRMF~\cite{yu2016temporal}, where the authors propose an AR based temporal regularization. In this paper, we extend this work to non-linear settings where the temporal regularization can be performed by a temporal convolution network (see Section~\ref{sec:temporal}). We further combine the global matrix factorization model with a temporal convolution network, thus creating a hybrid model that can focus on both local and global properties. There has been a concurrent work~\cite{wang2019deep}, where an RNN has been used to evolve a global state common to all time-series. 
\section{Problem Setting}
\label{sec:pdef}

We consider the problem of forecasting high-dimensional time series over
future time-steps. High-dimensional time-series datasets consist of several
possibly correlated time-series evolving over time along with corresponding covariates, and the task is to forecast
the values of those time-series in future time-steps. Before, we formally
define the problem, we will set up some notation.

{\bf \noindent Notation: }  We will use bold capital letters to denote
matrices such as $\*M \in \-R^{m \times n}$. $M_{ij}$ and $\*M[i,j]$ will be
used interchangeably to denote the $(i,j)$-th entry of the matrix $\*M$. Let
$[n] \triangleq \{1,2,...,n\}$ for a positive integer $n$. For
$\cI \subseteq [m]$ and $\cJ \subseteq [n]$, the notation $\*M[\cI,\cJ]$ will
denote the sub-matrix of $\*M$ with rows in $\cI$ and columns in $\cJ$.
$\*M[:,\cJ]$ means that all the rows are selected and similarly $\*M[\cI,:]$
means all the columns are chosen. $\cJ+s$ denotes all the set of elements in $\cJ$ increased by $s$. The notation $i:j$ for positive integers
$j>i$, is used to denote the set $\{i,i+1,...,j\}$. $\norm{\*M}_F$,
$\norm{\*M}_2$ denote the Frobenius and Spectral norms respectively. 
We will also define $3$-dimensional tensor notation in a similar way as above. Tensors will also be represented by bold capital letters $\*T \in \-R^{m \times r \times n}$. $T_{ijk}$ and $\*T[i,j,k]$ will be
used interchangeably to denote the $(i,j,k)$-th entry of the tensor $\*T$.   For
$\cI \subseteq [m]$, $\cJ \subseteq [n]$ and $\_K \subseteq [r]$, the notation $\*T[\cI,\_K,\cJ]$ will
denote the sub-tensor of $\*T$, restricted to the selected coordinates.  
By convention, all vectors in this paper are row vectors unless otherwise
specified. $\norm{\*v}_p$ denotes the $p$-norm of the vector $\*v \in \-R^{1 \times n}$. $\*v_{\cI}$
denotes the sub-vector with entries $\{v_i : \forall i \in \cI\}$ where $v_i$
denotes the $i$-th coordinate of $\*v$ and $\cI \subseteq [n]$. The notation
$\*v_{i:j}$ will be used to denote the vector $[v_i,...,v_j]$. The notation
$[\*v;\*u] \in \-R^{1 \times 2n}$ will be used to denote the concatenation of
two row vectors $\*v$ and $\*u$. For a vector $\*v \in \-R^{1 \times n}$,
$\mu(\*v) \triangleq (\sum_i v_i) / n$ denotes the empirical mean of the
coordinates and
$\sigma(\*v) \triangleq \sqrt{(\sum_i (v_i - \mu(\*v)^2)) / n }$ denotes the
empirical standard deviation.

{\bf Forecasting Task: } A time-series data-set consists of the raw time-series, represented by a matrix
$\*Y = [\Ytr \Yte]$, where $\Ytr\in \-R^{n\times t}$,
$\Yte\in \-R^{n \times \tau}$, $n$ is the number of time-series, $t$
is the number time-points observed during training phase, $\tau$ is the window
size for forecasting. $\*y^{(i)}$ is used to denote the $i$-th
time series, i.e., the $i$-th row of $\*Y$. In addition to the raw time-series, there may optionally be observed covariates,  represented by the tensor $\*Z \in \-R^{n\times r\times (t+\tau)}$. 
$\*z^{(i)}_j = \*Z[i,:,j]$ denotes the $r$-dimensional covariates for time-series $i$ and time-point $j$. Here, the covariates can consist of global features like time of the day, day of the week etc which are common to all time-series, as well as covariates particular to each time-series, for example vectorized text features describing each time-series.  The forecasting task is to accurately predict the future in the test range, given the original time-series $\Ytr$  in the training time-range and the covariates $\*Z$.  $\predYte \in \-R^{n \times \tau}$ will be used to denote
the predicted values in the test range.

{\bf Objective: } The quality of the predictions are generally measured using
a metric calculated between the predicted and actual values in the test range. One
of the popular metrics is the normalized absolute deviation
metric~\cite{yu2016temporal}, defined as follows,
\begin{align}
\label{eq:metric}
\_L(Y^{\text{(obs)}}, Y^{\text{(pred)}}) =
\frac{\sum_{i = 1}^{n}\sum_{j=1}^{\tau} |Y^{\text{(obs)}}_{ij} -
Y^{\text{(pred)}}_{ij}|}{\sum_{i = 1}^{n}\sum_{j=1}^{\tau} |Y^{\text{(obs)}}_{ij}|},
\end{align}
where $Y^{\text{(obs)}}$ and $Y^{\text{(pred)}}$ are the observed and
predicted values, respectively. This
metric is also referred to as WAPE in the forecasting literature. Other
commonly used evaluation metrics are defined in Section~\ref{sec:metric}. Note
that \eqref{eq:metric} is also used as a loss function in one of our proposed models. We also use the squared-loss $\_L_2(Y^{\text{(obs)}}, Y^{\text{(pred)}}) = (1/n\tau ) \norm{Y^{\text{(obs)}}- Y^{\text{(pred)}}}_F^{2}$, during training. 

\section{\dln: Handling Diverse Scales with TCN}
\label{sec:temporal}
In this section, we propose $\dln$ a simple initialization scheme for Temporal Convolution Networks (TCN)~\cite{bai2018empirical} which is designed to handle
high-dimensional time-series data with wide variation in scale, without apriori
normalization. As mentioned before, deep networks are difficult to train on time-series datasets, where the individual time-series have diverse scales. LSTM based models cannot be reliably trained on such datasets and may need apriori standard normalization~\cite{lai2018modeling} or pre-scaling of the bayesian mean predictions~\cite{flunkert2017deepar}.  TCN's have also been shown to require apriori normalization~\cite{borovykh2017conditional} for time-series predictions. The choice of normalization parameters can have a significant effect on the prediction performance. Here, we show that a simple initialization scheme for the TCN network weights can alleviate this problem and lead to reliable training without apriori normalization. First, we will briefly discuss the TCN architecture. 

\begin{figure*}
	\centering
	\subfloat[][]{\includegraphics[width = 0.40\linewidth]{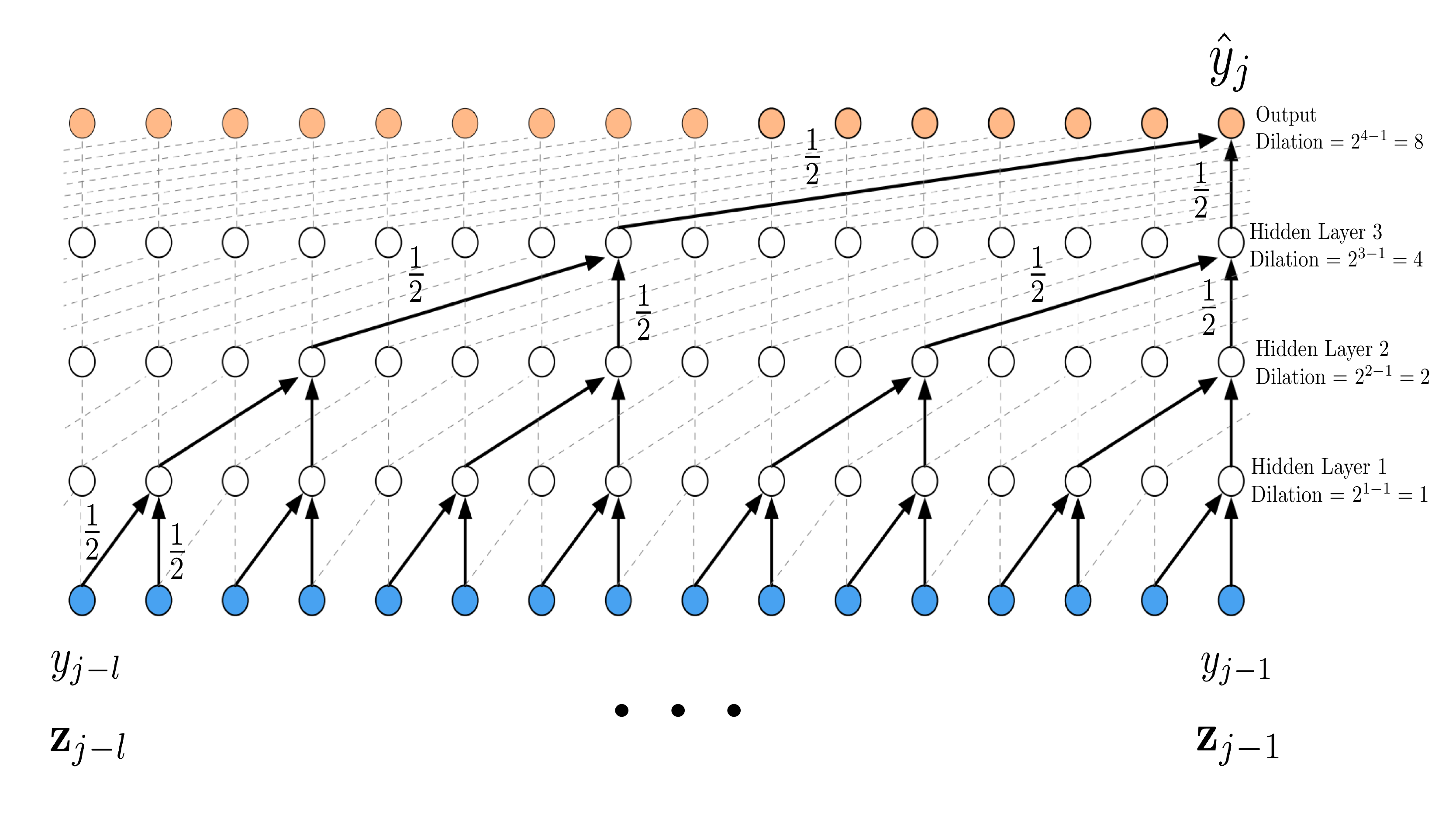}\label{fig:conv}} \hfill
	\subfloat[][]{\includegraphics[width = 0.40\linewidth]{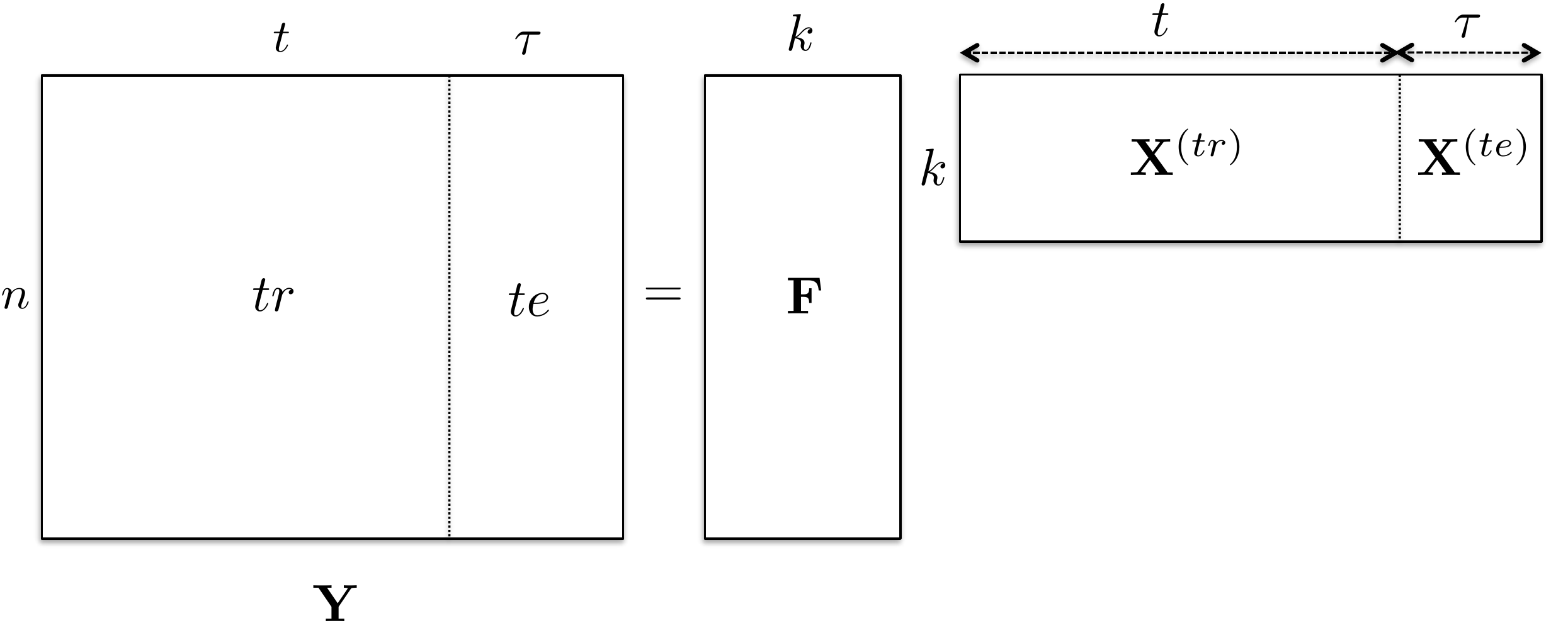}\label{fig:factors}} \hfill
	\caption{ \small Fig.~\ref{fig:conv} contains an illustration of a TCN. Note that the base image has been borrowed from~\cite{van2016wavenet}. The network has $d=4$ layers, with filter size $k=2$. The network maps the input $\*y_{t-l:t-1}$ to the one-shifted output $\hat{\*y}_{t-l+1:t}$. 
		Figure~\ref{fig:factors} presents an illustration of the matrix factorization approach in
		time-series forecasting. The $\Ytr$ training matrix can be factorized
		into low-rank factors $\*F$ ($\in \mathbb{R}^{n \times k}$) and $\Xtr$ (
		$\in \mathbb{R} ^{k \times t}$). If $\Xtr$ preserves temporal
		structures then the future values $\Xte$ can be predicted by a
		time-series forecasting model and thus the test period predictions can be made
		as $\*F\Xte$. }
	\label{fig:allthree}
	\vspace{-1em}
\end{figure*}

{\bf Temporal Convolution: } Temporal convolution (also referred to as Causal
Convolution)~\cite{van2016wavenet,borovykh2017conditional,bai2018empirical}
are multi-layered neural networks comprised of 1D convolutional layers with dilation. The dilation in layer $i$ is usually set as
$\mathrm{dil}(i) = 2^{i-1}$. A temporal convolution network with filter size $k$ and
number of layers $d$ has a dynamic range (or look-back) of
$l' := 1 + l = 1 + 2(k-1)2^{d-1}$. Note that it is assumed that the stride is $1$ in every layer and layer $i$ needs to be zero-padded in the beginning with $(k-1)\mathrm{dil}(i)$ zeros. An example of a temporal convolution network with one
channel per layer is shown in Fig.~\ref{fig:conv}.  For more details, we
refer the readers to the general description in~\cite{bai2018empirical}. Note
that in practice, we can have multiple channels per layer of a TC network.  The TC network can thus be treated as an object that takes
in the previous values of a time-series $\*y_{\cJ}$, where
$\cJ=\cbr{j - l, j - l + 1, \cdots, j - 1}$ along with the past covariates $\mathbf{z}_{\cJ}$, corresponding to that time-series and outputs the
one-step look ahead predicted value $\hat{\*y}_{\cJ+1}$. We will denote
a temporal convolution neural network by $\cT(\cdot|\Theta)$, where $\Theta$
is the parameter weights in the temporal convolution network. Thus, we have
$\hat{\*y}_{\cJ+1} = \cT(\*y_{\cJ},\*z_{\cJ} | \Theta)$. The same operators can be defined
on matrices. Given $\*Y \in \-R^{n \times t}, \*Z \in \-R^{n \times r \times (t+\tau)}$ and a set of row indices
$\cI = \{i_1,...,i_{b_n}\}\subset [n]$, we can write
$\hat{\*Y}[\cI,\cJ + 1] = \cT(\*Y[\cI,\cJ], \*Z[\cI,:,\cJ]|\Theta)$.

{\bf \dln Scheme: } One possible method to alleviate the issues with diverse scales, is to start with initial network parameters, that results in approximately predicting the average value of a given window of past time-points $\*y_{j-l:j-1}$, as the future prediction $\hat{y}_{j}$. The hope is that, over the course of training, the network would learn to predict variations around this mean prediction, given that the variations around this mean is relatively scale free.  This can be achieved through a simple initialization scheme for some configurations of TCN, which we call \dln. For ease of exposition, let us consider the setting without covariates and only one channel per layer, which can be functionally represented as $\hat{\*y}_{\cJ+1} = \cT(\*y_{\cJ}| \Theta)$.  In this case, the initialization scheme is to simply set all the filter weights to $1/k$, where $k$ is the filter size in every layer. This results in a proposition.

\begin{proposition}[\dln]
	\label{prop:leveled}
Let $\_T(\cdot| \Theta)$ be a TCN with one channel per layer, filter size $k = 2$, number of layers $d$. Here $\Theta$ denotes the weights and biases of the network. Let $[\hat{y}_{j-l+1}, \cdots, \hat{y}_{j}]  := \cT(\*y_{\cJ}| \Theta)$ for $\cJ = \{j-l,....,j-1\}$ and $ l = 2(k-1)2^{d-1}$.  If all the biases in $\Theta$ are set to $0$ and all the weights set to $1/k$ then, $  \hat{y}_{j} =  \mu (\*y_{\cJ})$ if $\*y \geq \*0$ and all activation functions are ReLUs. 
\end{proposition}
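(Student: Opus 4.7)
The plan is to exploit the non-negativity assumption to strip away the ReLU nonlinearities so the network becomes a linear cascade of dilated averaging filters, and then to show by induction on depth that this cascade realizes a uniform average. First I would observe that because all weights equal $1/2\ge 0$, all biases are $0$, and $\mathbf{y}\ge\mathbf{0}$, every intermediate activation is non-negative, so every ReLU acts as the identity. Thus $\cT(\cdot\mid\Theta)$ coincides with its underlying linear operator on the input window.

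Next I set $h^{(0)}:=\mathbf{y}$ and let $h^{(i)}$ denote the output of layer $i$. Since layer $i$ is a filter-size-$2$ causal convolution with dilation $2^{i-1}$ and weights $(1/2,1/2)$, the recurrence is
\begin{equation*}
h^{(i)}_t \;=\; \tfrac{1}{2}\bigl(h^{(i-1)}_t + h^{(i-1)}_{t-2^{i-1}}\bigr).
\end{equation*}
I would then prove by induction on $i$ that
\begin{equation*}
h^{(i)}_t \;=\; \frac{1}{2^{i}}\sum_{a=0}^{2^{i}-1} y_{t-a}.
\end{equation*}
The inductive step follows from the fact that every integer $a\in\{0,1,\dots,2^{i}-1\}$ has a unique binary expansion $a=\sum_{s=1}^{i} b_s\,2^{s-1}$ with $b_s\in\{0,1\}$, so unrolling the two-term recurrence through $i$ layers enumerates each offset in $\{0,\dots,2^{i}-1\}$ exactly once with weight $1/2^{i}$.

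Setting $i=d$ and reading off the output at the rightmost position, which by the stated causal convention produces $\hat{y}_j$ from $\mathbf{y}_{\cJ}$ with $\cJ=\{j-l,\dots,j-1\}$ and $l=2(k-1)2^{d-1}=2^{d}$, gives
\begin{equation*}
\hat{y}_j \;=\; \frac{1}{2^{d}}\sum_{a=0}^{2^{d}-1} y_{j-1-a} \;=\; \mu(\mathbf{y}_{\cJ}),
\end{equation*}
which is the claim.

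There is no genuine mathematical obstacle; the proposition essentially says that iterated two-tap averaging with geometrically growing dilation is a uniform average over the receptive field, and non-negativity is used only to neutralize the ReLUs. The one place where care is required is bookkeeping the causal padding and dilation offsets so that layer $d$'s output at the prediction position really samples every index of $\cJ$ exactly once; this is precisely what the binary-representation step in the induction guarantees.
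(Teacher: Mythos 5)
Your proof is correct and takes essentially the same route as the paper's: an induction on depth showing that the cascade of two-tap dilated averaging filters computes a uniform average over the $2^d$-point receptive field, with non-negativity used only to make the ReLUs act as the identity. Your write-up is in fact slightly more explicit than the paper's (the binary-expansion accounting of offsets and the justification that all intermediate activations stay non-negative are spelled out rather than left implicit), but the underlying argument is the same.
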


The above proposition shows that \dln results in a prediction $\hat{y}_{j}$, which is the average of the past $l$ time-points, where $l$ is the dynamic range of the TCN, when filter size is $k = 2$ (see Fig.~\ref{fig:conv}). The proof of proposition~\ref{prop:leveled} (see Section~\ref{sec:leveled})) follows from the fact that an activation value in an internal layer is the average of the corresponding $k$ inputs from the previous layer, and an induction on the layers yields the results.  \dln can also be extended to the case with covariates, by setting the corresponding filter weights to $0$ in the input layer. It can also be easily extended to multiple channels per layer with $k=2$. In Section~\ref{sec:results}, we show that a TCN with \dln can be trained reliably without apriori normalization, on real world datasets, even for values of $k \neq 2$. We provide the psuedo-code for training a TCN with \dln as Algorithm~\ref{algo:TCtrain}. 

Note that we have also experimented with a more sophisticated variation of the Temporal Convolution architecture termed as Deep Leveled Network (\dlnprev), which we include in Appendix~\ref{sec:leveled}. However, we observed that the simple initialization scheme for TCN (\dln) can match the performance of the Deep Leveled network.

\section{\golf: A Deep Global Local Forecaster}
\label{sec:deepglo}
In this section we will introduce our hybrid model \golf, that can leverage both global and local features, during training and prediction. In Section~\ref{sec:trmf}, we present the
global component, TCN regularized Matrix Factorization  (\MFLN). This model can capture global patterns
during prediction, by representing each of the original time-series as a
linear combination of $k$ basis time-series, where $k \ll n$. In
Section~\ref{sec:hybrid}, we detail how the output of the global model can be incorporated as an input covariate dimension for a TCN, thus leading to a hybrid model that can both focus on local per time-series signals and leverage global dataset wide components.

\begin{minipage}{0.43\textwidth}
	\vspace{-1em}
	\begin{algorithm}[H]
		\caption{Mini-batch Training for TCN with \dln}
		\label{algo:TCtrain}
		\scriptsize
		\begin{algorithmic}[1]
			\Require {learning rate $\eta$, horizontal batch size $b_t$, vertical
				batch size $b_n$, and $\texttt{maxiters}$}
			\State Initialize $\_T(\cdot|\Theta)$ according to \dln
			\For {$\texttt{iter} = 1, \cdots, \texttt{maxiters}$}
			\For {each batch with indices $\cI$ and $\cJ$ in an epoch}
			\State $\cI = \cbr{i_1,...,i_{b_n}}$ and $\cJ = \cbr{j+1,j+2,...,j + b_t}$
			\State
			$\hat{\*Y} \leftarrow \cT(\*Y[\cI,\cJ], \*Z[\cI,:,\cJ]|\Theta)$ 
			\State $\Theta \leftarrow \Theta -\eta \frac{\partial}{\Theta} \_L(\*Y[\cI,\cJ+1], \hat{\*Y})$
			\EndFor
			\EndFor
		\end{algorithmic}
	\end{algorithm}
\end{minipage}
\hfill
\begin{minipage}{0.43\textwidth}
		\begin{algorithm}[H]
		\caption{Temporal Matrix Factorization Regularized by TCN (\MFLN)}
		\label{algo:dtrmf}
		\scriptsize
		\begin{algorithmic}[1]
			\Require $\texttt{iters}_{\texttt{init}}$, $\texttt{iters}_{\texttt{train}}$, $\texttt{iters}_{\texttt{alt}}$.
			\State {/* {\em Model Initialization} */}
			\State Initialize $\cT_X(\cdot) $ by \dln 
			\State Initialize $\*F$ and $\Xtr$ by Alg~\ref{algo:factors} for $\texttt{iters}_{\texttt{init}}$ iterations.
			\State {/* {\em Alternate training cycles} */}
			\For {$\texttt{iter} = 1,2,...,\texttt{iters}_{\texttt{alt}}$}
			\State Update $\*F$ and $\Xtr$ by Alg~\ref{algo:factors} for $\texttt{iters}_{\texttt{train}}$ iterations
			\State Update $\cT_X(\cdot)$ by Alg~\ref{algo:TCtrain} on $\Xtr$ for $\texttt{iters}_{\texttt{train}}$ iterations, with \textit{no covariates}. 
			\EndFor
		\end{algorithmic}
	\end{algorithm}
	
	\vspace{-1em}
\end{minipage}

\subsection{Global: Temporal Convolution Network regularized Matrix Factorization (\MFLN)}
\label{sec:trmf}
In this section we propose a low-rank matrix factorization model for
time-series forecasting that uses a TCN for regularization. 
The idea is to factorize the training time-series matrix $\Ytr$ into low-rank
factors $\*F\in \-R^{n \times k}$ and $\Xtr \in \-R^{k \times t}$,
where $k \ll n$. This is illustrated in Figure~\ref{fig:factors}. Further, we
would like to encourage a temporal structure in $\Xtr$ matrix, such that the
future values $\Xte$ in the test range can also be forecasted. Let
$\*X = [\Xtr \Xte]$. Thus, the matrix $\*X$ can be thought of to be
comprised of $k$ \textit{basis time-series} that capture the global temporal
patterns in the whole data-set and all the original time-series are linear
combinations of these basis time-series. In the next subsection we will
describe how a TCN can be used to encourage the temporal structure for $\*X$.

{\bf Temporal Regularization by a TCN:} If we are provided with a TCN that captures the temporal patterns in the training data-set
$\Ytr$, then we can encourage temporal structures in $\Xtr$ using
this model. Let us assume that the said network is $\_T_X(\cdot)$. The
temporal patterns can be encouraged by including the following temporal
regularization into the objective function:
\begin{align}
&\_R(\Xtr\mid \_T_X(\cdot)) := \frac{1}{|\cJ|}\_L_2\rbr{\*X[:,\cJ],
	\cT_{X}\rbr{\*X[:, \cJ - 1]}},\label{eq:temporal_loss}
\end{align}
where $\cJ = \cbr{2, \cdots, t}$ and $\_L_2(\cdot,\cdot)$ is the squared-loss metric, defined before. This implies that the values of the $\*X^{(tr)}$ on
time-index $j$ are close to the predictions from the temporal network applied
on the past values between time-steps $\{j-l,...,j-1\}$. Here, $l+1$ is 
equal to the dynamic range of the network defined in
Section~\ref{sec:temporal}. Thus the overall loss function for the factors and
the temporal network is as follows:
\begin{align}
\cL_G(\Ytr,\*F,\Xtr,\cT_X)
:= \cL_2\rbr{\Ytr, \*F\Xtr} + \lambda_{\cT} \cR(\Xtr \mid \cT_X(\cdot)), \label{eq:global_loss}
\end{align}
where $\lambda_{\_T}$ is the regularization parameter for the temporal
regularization component.

{\bf Training:} The low-rank factors $\*F,\Xtr$ and the temporal network
$\cT_X(\cdot)$ can be trained alternatively to approximately minimize the loss
in Eq.~\eqref{eq:global_loss}. The overall training can be performed through
mini-batch SGD and can be broken down into two main components performed
alternatingly: $(i)$ given a fixed $\cT_X(\cdot)$ minimize
$\cL_G(\*F,\Xtr,\cT_X)$ with respect to the factors $\*F,\Xtr$ -
Algorithm~\ref{algo:factors} and $(ii)$ train the network $\cT_X(\cdot)$ on
the matrix $\Xtr$ using Algorithm~\ref{algo:TCtrain}.

The overall algorithm is detailed in Algorithm~\ref{algo:dtrmf}. The 
TCN $\cT_X(\cdot)$ is first initialized by \dln. Then in
the second initialization step, two factors $\*F$ and $\Xtr$
are trained using the initialized $\cT_X(\cdot)$ (step 3), for
$\texttt{iters}_{\texttt{init}}$ iterations.
This is followed by the $\texttt{iters}_{\texttt{alt}}$ alternative steps to
update $\*F$, $\Xtr$, and $\cT_X(\cdot)$ (steps 5-7). 

{\bf Prediction: } The trained model local network $\_T_X(\cdot)$ can be used for multi-step
look-ahead prediction in a standard manner. Given the past data-points of a basis
time-series, $\*x_{j-l:j-1}$, the prediction for the next time-step, 
$\hat{x}_j$ is given by  $[\hat{x}_{j-l+1}, \cdots, \hat{x}_{j}]  := \cT_X(\*x_{j-l:j-1})$
Now, the one-step look-ahead prediction can be concatenated with the past
values to form the sequence
$\tilde{\*x}_{j-l+1:j} = [\*x_{j-l+1:j-1}  \hat{x}_{j}]$, which can be again
passed through the network to get the next prediction:
$ [\cdots, \hat{x}_{j+1}] = \cT_X(\tilde{\*x}_{j-l+1:j})$. The same procedure can be repeated $\tau$ times to predict $\tau$ time-steps
ahead in the future. Thus we can obtain the basis time-series in the test-range $\hat{\*X}^{(te)}$. The final global predictions are then given by $\*Y^{(te)} = \*F\hat{\*X}^{(te)}$. 

\begin{remark}
	Note that \MFLN can perform rolling predictions without retraining unlike \trmf. We provide more details in Appendix~\ref{sec:roll}, in the interest of space.
\end{remark}

\begin{minipage}{0.43\textwidth}
	
		\begin{algorithm}[H]
		\caption{ Training the Low-rank factors $\*F,\Xtr$ given a fixed network $\_T_X(\cdot)$, for one epoch }
		\label{algo:factors}
		\scriptsize
		\begin{algorithmic}[1]
			\Require learning rate $\eta$, a TCN $\cT_X(\cdot)$.
			\For {each batch with indices $\cI$ and $\cJ$ in an epoch}
			\State $\cI = \cbr{i_1,...,i_{b_n}}$ and $\cJ = \cbr{j+1,j+2,...,j + b_t}$
			\State $\*X[:,\cJ] \leftarrow \*X[:,\cJ]  -\eta \frac{\partial}{\partial\*X[:,\cJ]} \cL_G(\*Y[\cI,\cJ],\*F[\cI,:], \*X[:, \cJ], \cT_X)$
			\State $\*F[\cI,:] \leftarrow \*F[\cI,:] - \eta \frac{\partial}{\partial\*F[\cI,:]} \cL_G(\*Y[\cI,\cJ],\*F[\cI,:], \*X[:, \cJ], \cT_X)$
			\EndFor
		\end{algorithmic}
	\end{algorithm}

\end{minipage}
\hfill
\begin{minipage}{0.43\textwidth}

	\begin{algorithm}[H]
		\caption{\golf - Deep Global Local Forecaster}
		\label{algo:complete}
		\scriptsize
		\begin{algorithmic}[1]
			\State Obtain global $\*F$, $\Xtr$ and $\cT_X(\cdot)$ by Alg~\ref{algo:dtrmf}.
			\State Initialize $\_T_Y(\cdot)$ with number of inputs $r+2$ and \dln. 
			\State  {/* {\em Training hybrid model} */ }
			\State  Let $\hat{\*Y}^{(g)}$ be the global model prediction in the training range.
			\State  Create covariates $\*Z' \in \mathbb{R}^{n \times (r + 1)  \times t}$  s.t $\*Z'[:,1,:] = \hat{\*Y}^{(g)}$ and $\*Z'[:,2:r+1,:] = \*Z[:,:,1:t]$.
			\State  Train  $\_T_Y(\cdot)$ using Algorithm~\ref{algo:TCtrain} with time-series $\*Y^{(tr)}$ and covariates $\*Z'$. 
		\end{algorithmic}
	\end{algorithm}
	
\end{minipage}

\subsection{Combining the Global Model with Local Features}
\label{sec:hybrid}
In this section, we present our final hybrid model. Recall that our forecasting task has as input the training raw time-series $\*Y^{(tr)}$ and the covariates $\*Z \in \mathbb{R}^{n\times r \times (t+\tau)}$. Our hybrid forecaster is a TCN $\_T_Y(\cdot | \Theta_Y)$ with a input size of $r+2$ dimensions: $(i)$ one of the inputs is reserved for the past points of the original raw time-series, $(ii)$ $r$ inputs for the original $r$-dimensional covariates and $(iii)$ the remaining dimension is reserved for the output of the global \MFLN model, which is fed as input covariates. The overall model is demonstrated in Figure~\ref{fig:illu}. The training pseudo-code for our model is provided as Algorithm~\ref{algo:complete}. 

\begin{figure*}
	\vspace{-1em}
	\centering
	
	\subfloat[][]{\includegraphics[width = 0.40\linewidth]{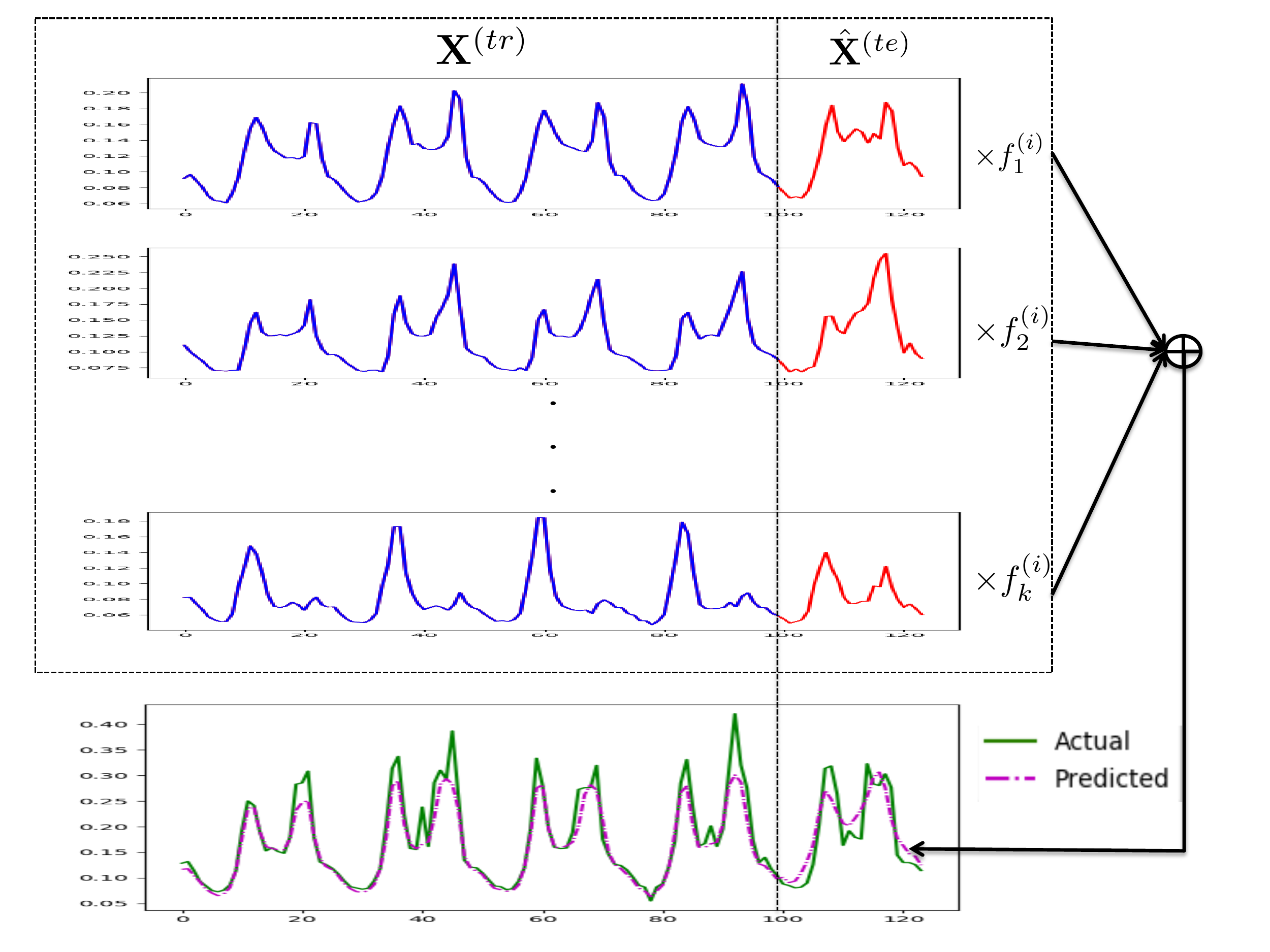}\label{fig:xnet}} \hfill
	\subfloat[][]{\includegraphics[width = 0.40\linewidth]{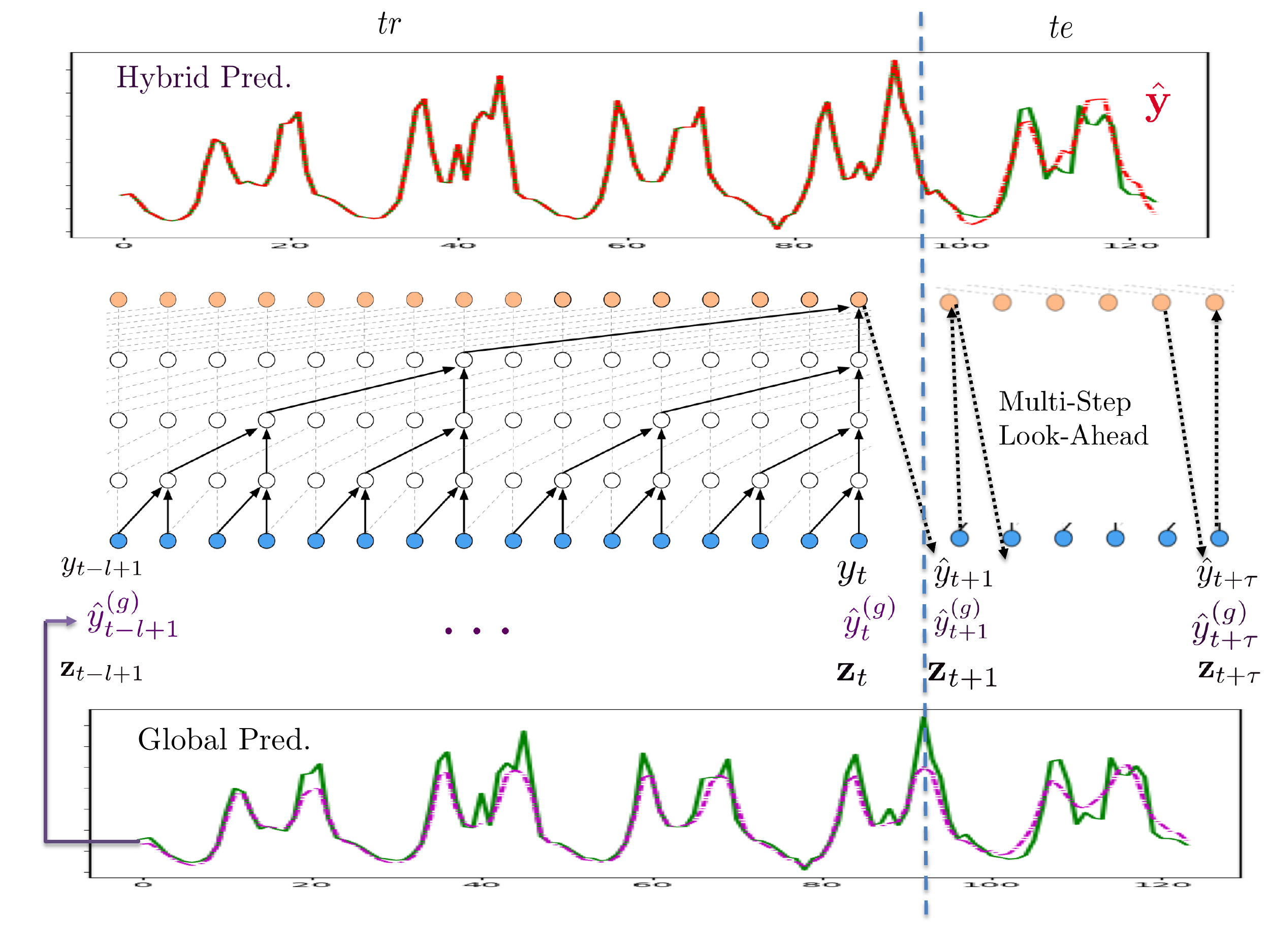}\label{fig:glo}}
	\caption{ \small In Fig.~\ref{fig:xnet}, we show some of the basis
		time-series extracted from the \traffic dataset, which can be combined
		linearly to yield individual original time-series. It can be seen that the
		basis series are highly temporal and can be predicted in the test range using
		the network $\cT_X(\cdot|\Theta_X)$. In Fig.~\ref{fig:glo} (base image borrowed from~\cite{van2016wavenet}), we show an illustration of \golf. The TCN shown is the network $\_T_Y(\cdot)$, which takes in as input the original time-points, the original covariates and the output of the global model as covariates. Thus this network can combine the local properties with the output of the global model during prediction. }
	\label{fig:illu}
	\vspace{-1em}
\end{figure*}

Therefore, by providing the global \MFLN model prediction as covariates to a TCN, we can make the final predictions a function of global dataset wide properties as well as the past values of the local time-series and covariates. Note that both rolling predictions and multi-step look-ahead predictions can be performed by \golf, as the global \MFLN model and the hybrid TCN
$\cT_Y(\cdot)$ can perform the forecast, without any need for re-training. We illustrate some representative results on a time-series from
the \ dataset in Fig.~\ref{fig:illu}. In
Fig.~\ref{fig:xnet}, we show some of the basis time-series (global features)
extracted from the \traffic dataset, which can be combined linearly to yield
individual original time-series. It can be seen that the basis series are
highly temporal and can be predicted in the test range using the network
$\cT_X(\cdot|\Theta_X)$. In Fig.~\ref{fig:glo}, we illustrate the complete architecture of \golf. It can be observed that the output of the global \MFLN model is inputted as a covariate to the TCN $\_T_Y(\cdot)$, which inturn combines this with local features, and predicts in the test range through multi-step look-ahead predictions. 

\begin{table}
	 \vspace{-.5em}
	\caption{\small Data statistics and Evaluation settings. In
  the rolling Pred., $\tau_w$ denotes the number of time-points in each
window and $n_w$ denotes the number of rolling windows.
$\mathrm{std}(\{\mu\})$ denotes the standard deviation among the means of all
the time series in the data-set i.e
$\mathrm{std}(\{\mu\}) =  \mathrm{std}(\{\mu(\*y^{(i)})\}_{i=1}^{n})$.
Similarly, $\mathrm{std}(\{\mathrm{std}\})$ denotes the standard deviation
among the std. of all the time series in the data-set i.e
$\mathrm{std}(\{\mathrm{std}\}) =  \mathrm{std}(\{\mathrm{std}(\*y^{(i)})\}_{i=1}^{n})$. It can be seen that the \elec and \wiki datasets have wide variations in
scale. }
	\label{tab:data}
  \begin{center}
	\small
    \begin{tabular}{l|rrrrcc}
      Data      & $n$    & $t$    &$\tau_w$ & $n_w$ & $\mathrm{std}(\{\mu(\*y_i)\})$ & $\mathrm{std}(\{\mathrm{std}(\*y_i)\})$	\\
      \hline
      \hline
      \elec       & 370    & 25,968 & 24 & 7 & $1.19e4$ & $7.99e3$ \\
      \hline
      \traffic    & 963    & 10,392 & 24 & 7 & $1.08e{-2}$  & $1.25e{-2}$ \\
      \hline
      \wiki       & 115,084& 747    & 14 & 4 & $4.85e{4} $& $1.26e4$ \\
      \hline
      \pems & 228 & 11,232 & 9 & 160 & $3.97$  & $4.42$ \\
      \hline
    \end{tabular}
	
  \end{center}
\vspace{-1em}
\end{table}

\begin{table*}[!ht]
	\caption{\small Comparison of algorithms on normalized and unnormalized versions of
		data-sets on rolling prediction tasks The error metrics reported are
		WAPE/MAPE/SMAPE (see Section~\ref{sec:metric}). \trmf is retrained before
		every prediction window, during the rolling predictions. All other models are
		trained once on the initial training set and used for further prediction for
		all the rolling windows. Note that for \deepar, the normalized column represents model trained with \texttt{scaler=True} and unnormalized represents \texttt{scaler=False}. Prophet could not be scaled to the \wiki dataset, even
		though it was parallelized on a 32 core machine. Below the main table, we provide MAE/MAPE/RMSE comparison with the models implemented in~\cite{yu2016temporal}, on the \pems dataset.}
	\label{tab:results}
	\small
	\begin{center}
		\resizebox{\textwidth}{!}{
			\begin{tabular}{@{}c|c|@{ }cc@{ }||@{ }cc@{ }||@{ }cc@{}}
				& \multirow{2}{*}{Algorithm} &
				\multicolumn{2}{c}{\elec $n=370$} &
				\multicolumn{2}{c}{\traffic $n=963$} &
				\multicolumn{2}{c}{\wiki $n=115,084$} \\
				& & Normalized & Unnormalized & Normalized & Unnormalized & Normalized & Unnormalized \\
				\hline
				\hline
				\multirow{3}{*}{Proposed} & \golf & 0.133/0.453/0.162& {\bf 0.082}/0.341/{\bf 0.121} & 0.166/ 0.210 /0.179 &  {\bf 0.148}/{\bf 0.168}/{\bf 0.142} & 0.569/3.335/1.036 & 0.237/0.441/0.395\\

				& Local TCN (\dln)& 0.143/0.356/0.207& 0.092/{\bf 0.237}/0.126 & 0.157/{\bf 0.201}/0.156 & 0.169/0.177/0.169 & {\bf 0.243}/{\bf 0.545}/{\bf 0.431} & \bf{ 0.212}/{\bf 0.316}/{\bf0.296} \\
				
				& Global \MFLN & 0.144/0.485/0.174& 0.106/0.525/0.188 & 0.336/0.415/0.451 & 0.226/0.284/0.247 & 1.19/8.46/1.56 & 0.433/1.59/0.686 \\
				\hline
				\multirow{4}{*}{Local-Only} & LSTM & 0.109/0.264/0.154 & 0.896/0.672/0.768  & 0.276/0.389/0.361 & 0.270/0.357/0.263 & 0.427/2.170/0.590 & 0.789/0.686/0.493 \\

				& \deepar & {\bf 0.086}/{\bf 0.259}/ \bf{0.141} & 0.994/0.818/1.85 & {\bf0.140}/{\bf 0.201}/{ \bf 0.114} & 0.211/0.331/0.267 & 0.429/2.980/0.424 & 0.993/8.120/1.475 \\

				& TCN (no \dln). & 0.147/0.476/0.156 & 0.423/0.769/0.523 & 0.204/0.284/0.236  & 0.239/0.425/0.281 & 0.336/1.322/0.497 & 0.511/0.884/0.509 \\

				& Prophet & 0.197/0.393/0.221 & 0.221/0.586/0.524  & 0.313/0.600/0.420 & 0.303/0.559/0.403 & - & - \\
				\hline
				\multirow{2}{*}{Global-Only} & \trmf(retrained) & 0.104/0.280/0.151 &  0.105/0.431/0.183 &  0.159/0.226/ 0.181 &  0.210/ 0.322/ 0.275 & 0.309/0.847/0.451 & 0.320/0.938/0.503 \\

				& SVD+TCN & 0.219/0.437/0.238 & 0.368/0.779/0.346  & 0.468/0.841/0.580 & 0.329/0.687/0.340 & 0.697/3.51/0.886 & 0.639/2.000/0.893 \\
				\hline
			\end{tabular}
			\vspace{1em}
		} \hfill

		\begin{tabular}{c|@{ }cc@{ }}
			\small
       Algorithm & \pems (MAE/MAPE/RMSE) \\
      \hline
      \hline
			 \golf (Unnormalized) & {\bf 3.53}/ {\bf 0.079} / {\bf 6.49}\\
			 \golf (Normalized) & 4.53/ 0.103 / 6.91 \\
			 STGCN(Cheb) & 3.57/0.087/6.77 \\
			 STGCN($1^{st}$) & 3.79/0.091/7.03 \\
			\hline
		\end{tabular}
	\end{center}
	\vspace{-.5em}
\end{table*}

\section{Empirical Results}
\label{sec:results}

In this section, we validate our model on four real-world data-sets on
rolling prediction tasks (see Section~\ref{sec:rolling} for more details)
against other benchmarks. The data-sets in consideration are, $(i)$
\elec~\cite{elec} - Hourly load on $370$ houses. The training set consists of
$25,968$ time-points and the task is to perform rolling validation for the next
7 days (24 time-points at a time, for 7 windows) as done
in~\cite{yu2016temporal,rangapuram2018deep,flunkert2017deepar};$(ii)$
\traffic~\cite{cuturi2011fast} - Hourly traffic on $963$ roads in San
Francisco. The training set consists of $10m392$ time-points and the task is to
perform rolling validation for the next 7 days (24 time-points at a time, for
7 windows) as done
in~\cite{yu2016temporal,rangapuram2018deep,flunkert2017deepar} and $(iii)$
\wiki~\cite{wiki} - Daily web-traffic on about $115,084$ articles from Wikipedia. We
only keep the time-series without missing values from the original data-set.
The values for each day are normalized by the total traffic on that day across
all time-series and then multiplied by $1e8$. The training set consists of
$747$ time-points and the task is to perform rolling validation for the next
86 days, 14 days at a time. More data statistics indicating scale variations
are provided in Table~\ref{tab:data}.  $(iv)$ \pems~\cite{chen2001freeway} - Data collected from Caltrain PeMS system, which contains data for $228$ time-series,
collected at 5 min interval. The training set consists of
$11232$ time-points and we perform rolling validation for the next
1440 points, 9 points at a time.


For each data-set, all models are compared on two different settings. In the
first setting the models are trained on \textit{normalized} version of the
data-set where each time series in the training set is re-scaled as
$\tilde{\*y}^{(i)}_{1:t-\tau} = (\*y^{(i)}_{1:t-\tau} - \mu(\*y^{(i)}_{1:t-\tau}))/( \sigma(\*y^{(i)}_{1:t-\tau}))$
and then the predictions are scaled back to the original scaling. In the
second setting, the data-set is \textit{unnormalized} i.e left as it is while
training and prediction. Note that all models are compared in the test range
on the original scale of the data. The purpose of these two settings is to
measure the impact of scaling on the accuracy of the different models.

\vspace{-0.5em}
The models under contention are:
{\bf (1)} \golf: The combined local and global  model proposed in Section~\ref{sec:hybrid}. We use time-features like time-of-day, day-of-week etc. as global covariates, similar to \deepar. For a more detailed discussion, please refer to Section~\ref{sec:mparams}.
{\bf (2)} Local TCN (\dln): The temporal convolution based architecture discussed in Section~\ref{sec:temporal}, with \dln.
{\bf (3)} LSTM: A simple LSTM block that predicts the time-series values as function of the hidden states~\cite{gers1999learning}.
{\bf (4)} \deepar: The model proposed in~\cite{flunkert2017deepar}.
{\bf (5)} TCN: A simple Temporal Convolution model as described in Section~\ref{sec:temporal}.
{\bf (6)} Prophet: The versatile forecasting model from Facebook based on classical techniques~\cite{prophet}.
{\bf (7)} \trmf: the model proposed in ~\cite{yu2016temporal}. Note that this model needs to be retrained for every rolling prediction window.
{\bf (8)} SVD+TCN: Combination of SVD and TCN. The original data is factorized as $\*Y=\*U\*V$ using SVD and a leveled network is trained on the $\*V$. This is a simple baseline for a global-only approach.
 {\bf (9)} STGCN: The spatio-temporal models in~\cite{yu2017spatio}.
 We use the same hyper-parameters for \deepar on the \traffic and
\elec datasets, as specified in~\cite{flunkert2017deepar}, as implemented in GluonTS~\cite{gluonts}. The WAPE
values from the original paper could not be directly used, as there are
different values reported in~\cite{flunkert2017deepar}
and~\cite{rangapuram2018deep}. Note that for \deepar, normalized and unnormalized settings corresponds to using \texttt{sclaing=True} and \texttt{scaling=False} in the GluonTS package.
The model in \trmf~\cite{yu2016temporal} was
trained with different hyper-parameters (larger rank) than in the original
paper and therefore the results are slightly better. More details about all
the hyper-parameters used are provided in Section~\ref{sec:mresults}. The rank
$k$ used in \elec, \traffic, \wiki and \pems are $64/60,64/60$, $256/1,024$ and $64/-$ for \golf/\trmf.

\vspace{-0.5em}
In Table~\ref{tab:results}, we report WAPE/MAPE/SMAPE (see definitions in
Section~\ref{sec:metric}) on the first three datasets under both normalized and
unnormalized training. We can see
that \golf features among the top two models in almost all categories, under all three
metrics. \golf does better than the individual local TCN (\dln) method and the global \MFLN model on average, as it is aided by both global and local factors.  The local TCN (\dln) model  performs the best on the larger \wiki dataset with $>30\%$  improvement over all models (not proposed in this paper) in terms of WAPE, while DeepGLO is close behind with greater than $25\%$ improvement over all other models. We also find that \golf performs better in the unnormalized setting on all instances, because there is no need for scaling the input and rescaling back the outputs of the model.  We find that the TCN (no \dln), \deepar\footnote{Note that for \deepar, normalized means the GluonTS implementation run with \texttt{scaler=True} and unnormalized means \texttt{scaler=False}} and the LSTM models do not perform well in the unnormalized setting as expected. In the second table, we compare \golf with the models in~\cite{yu2017spatio}, which can capture global features but require a weighted graph representing closeness relations between the time-series as \textit{external} input. We see that \golf (unnormalized) performs the best on all metrics, even though it does not require any external input.  Our implementation can be found at https://github.com/rajatsen91/deepglo.


\bibliography{dtrmf}
\bibliographystyle{plain}

\clearpage
\appendix

\section{Deep Leveled Network}
\label{sec:leveled}
Large scale time-series datasets containing upwards of hundreds of thousands
of time-series can have very diverse scales. The diversity in scale leads to
issues in training deep models, both Temporal Convolutions and LSTM based
architectures, and some normalization is needed for training to
succeed~\cite{lai2018modeling,borovykh2017conditional,flunkert2017deepar}.
However, selecting the correct normalizing factor for each time-series is not
an exact science and can have effects on predictive performance. For instance
in~\cite{flunkert2017deepar} the data-sets are whitened using the training
standard deviation and mean of each time-series while training, and the
predictions are renormalized. On the other hand
in~\cite{borovykh2017conditional}, each time-series is rescaled by the value
of that time-series on the first time-step. Moreover, when performing rolling
predictions using a pre-trained model, when new data is observed there is a
potential need for updating the scaling factors by incorporating the new
time-points. In this section we propose \dlnprev, a simple \textit{leveling network}
architecture that can be trained on diverse datasets without the need for a
priori normalization.

\dlnprev consists of two temporal convolution blocks (having the same dynamic
range/look-back $l$) that are trained concurrently. Let
us denote the two networks and the associated weights by $\cT_m(\cdot|\Theta_m)$
and $\cT_r(\cdot|\Theta_r)$ respectively. The key idea is to have $\cT_m(\cdot|\Theta_m)$
(the leveling component) to predict the rolling mean of the next $w$ future
time-points given the past. On the other-hand $\cT_r(\cdot|\Theta_r)$ (the residual
component) will be used to predict the variations with respect to this mean
value. Given an appropriate window size $w$ the
rolling mean stays stable for each time-series and can be predicted by a
simple temporal convolution model and given these predictions the additive
variations are relatively scale free i.e. the network $\cT_r(\cdot|\Theta_r)$ can
be trained reliably without normalization. This can be summarized by the
following equations:
\begin{align}
[\hat{y}_{j-l+1}, \cdots, \hat{y}_{j}] &= \cT_{\dlnprev}\rbr{\*y_{j-l:j-1}|\Theta_m, \Theta_r} 
:= \cT_m\rbr{\*y_{j-l:j-1}|\Theta_m}\! +\! \cT_r(\*y_{j-l:j-1}|\Theta_r) \label{eq:leveled}  \\
[\hat{m}_{j-l+1}, \cdots, \hat{m}_{j}] &= \cT_m(\*y_{j-l:j-1}|\Theta_m)\label{eq:leveled-mean}\\
[\hat{r}_{j-l+1}, \cdots, \hat{r}_{j}] &= \cT_r(\*y_{j-l:j-1}|\Theta_r)\label{eq:leveled-residual}
\end{align}
where we want $\hat{m}_{j}$ to be close to $\mu(\*y_{j:j+w-1})$ and
$\hat{r}_{j}$ to be close to $y_j - \mu(\*y_{j:j+w-1})$. An illustration of
the leveled network methodology is shown in the above figure.

{\bf \noindent Training:} Both the networks can be trained concurrently given
the training set $\Ytr$, using mini-batch stochastic gradient updates.
The pseudo-code for training a \dlnprev is described in
Algorithm~\ref{algo:TCtrain}. The loss
function $\_L(\cdot,\cdot)$ used is the same as the metric defined in
Eq.~\eqref{eq:metric}. Note that in Step 9, the leveling component
$\cT_m(\cdot|\Theta_m)$ is held fixed and only $\cT_r(\cdot|\Theta_r)$ is updated.



\begin{figure}
	\centering
	\includegraphics[width = 0.45\linewidth]{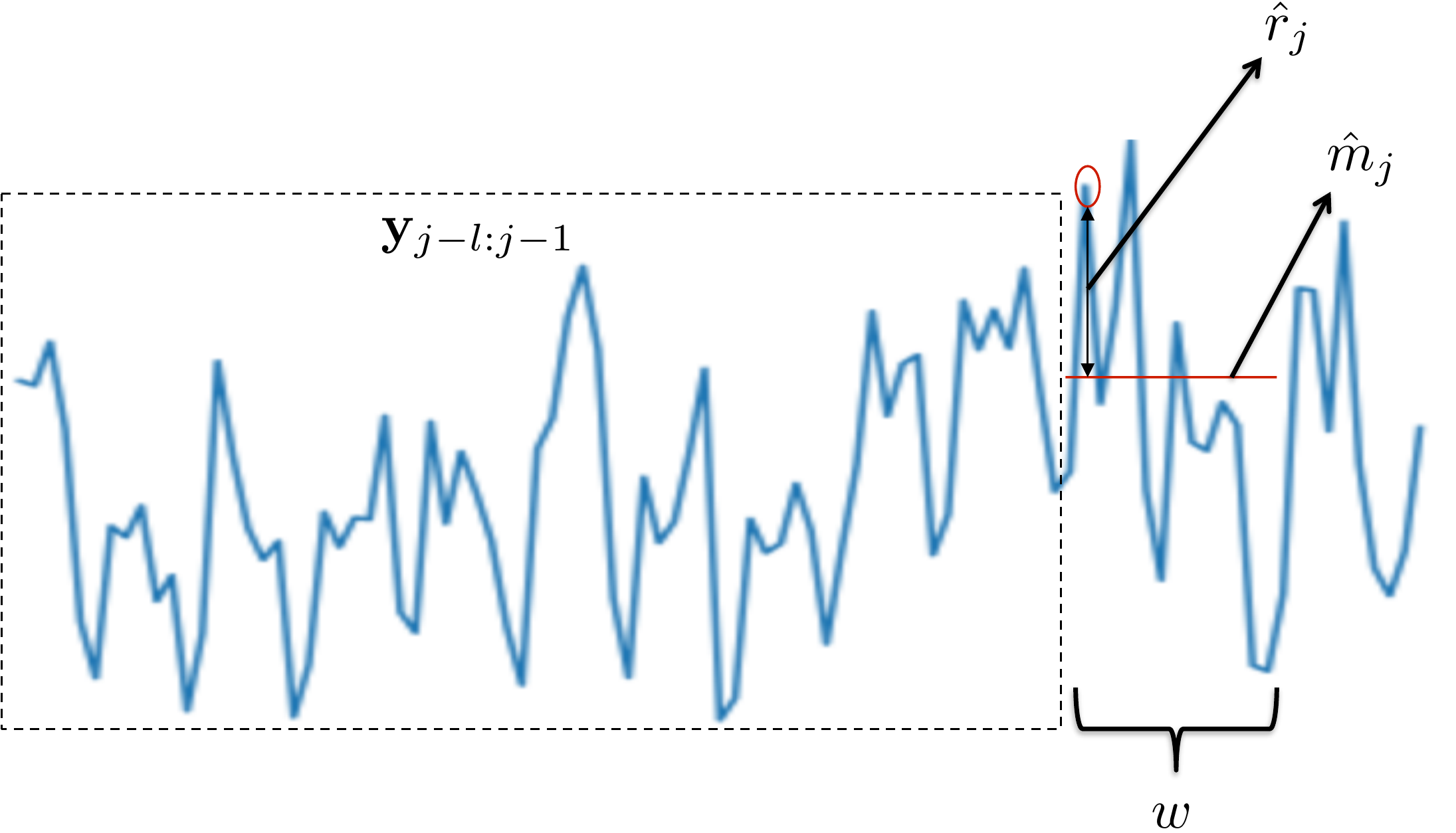}
	\label{fig:level}
	\caption{Illustration of \dlnprev}
\end{figure}

{\bf \noindent Prediction:} The trained model can be used for multi-step
look-ahead prediction in a standard manner. Given the past data-points of a
time-series, $\*y_{j-l:j-1}$, the prediction for the next time-step is given
by $\hat{y}_j$ defined in \eqref{eq:leveled}.
Now, the one-step look-ahead prediction can be concatenated with the past
values to form the sequence
$\tilde{\*y}_{j-l+1:j} = [\*y_{j-l+1:j-1}  \hat{y}_{j}]$, which can be again
passed through the network and get the next prediction:
$ [\cdots, \hat{y}_{j+1}] = \cT_{\text{DLN}}(\tilde{\*y}_{j-l+1:j})$. The same procedure can be repeated $\tau$ times to predict $\tau$ time-steps
ahead in the future. Table~\ref{tab:levres} shows the performance of leveled network on the same datasets and mertics used in Table~\ref{tab:results}. 

\begin{table*}[!ht]
	\caption{\small Performance of \dlnprev on the same datasets and metrics as in Table~\ref{tab:results}. }
	\label{tab:levres}
	\small
	\begin{center}
		\resizebox{\textwidth}{!}{
			\begin{tabular}{@{}c|c|@{ }cc@{ }||@{ }cc@{ }||@{ }cc@{}}
				& \multirow{2}{*}{Algorithm} &
				\multicolumn{2}{c}{\elec $n=370$} &
				\multicolumn{2}{c}{\traffic $n=963$} &
				\multicolumn{2}{c}{\wiki $n=115,084$} \\
				& & Normalized & Unnormalized & Normalized & Unnormalized & Normalized & Unnormalized \\
				\hline
				\hline
				\multirow{1}{*}{Proposed} & Local \dlnprev & 0.086/0.258/0.129& 0.118/ 0.336/0.172 & 0.169/0.246/0.218 & 0.237/0.422/0.275 & 0.235/0.469/ 0.346 & 0.288/0.397/0.341 \\
				\hline
			\end{tabular}
			\vspace{1em}
		}
	\end{center}
\end{table*}

We will now provide a proof of Proposition~\ref{prop:leveled}. 

\begin{proof}[Proof of Proposition~\ref{prop:leveled}] 
	We will follow the convention of numbering the input layer as layer $0$ and each subsequent layers in increasing order. ~\ref{fig:conv} shows a TCN with the last layered numbered $d=4$. Each neuron in a layer is numbered starting at $0$ from the right-hand side. $a_{i,j}$ denote neuron $j$ in layer $i$. We will focus on the neurons in each layer that take part in the prediction $\hat{y}_j$.  Note that on layer $d-1$ ($d$ being the last layer), the neurons that are connected to the output $\hat{y}_j$, are $a_{d-1,0}$ and $a_{d-1,2^{d-1}}$. Therefore, we have $\hat{y}_j = \frac{1}{2} (a_{d-1,0} + a_{d-1,2^{d-1}})$, whenever the neurons have values greater equal to zero.  Similary, any neuron $a_{l,s*2^{l}}$ is the average of $a_{l-1,s*2^{l}}$ and $a_{l-1,(2s+1)*2^{l-1}}$.  Therefore, by induction we have that,
	
	\begin{equation}
	\hat{y}_j = \frac{1}{2*2^{d-1}} \sum_{i=1}^{2*2^{d-1}} y_{j-i}. 
	\end{equation}
	
\end{proof}

\section{Rolling Prediction without Retraining}
\label{sec:roll} 
Once the \MFLN model is trained using
Algorithm~\ref{algo:dtrmf}, we can make predictions on the test range using
multi-step look-ahead prediction. The method is straight-forward - we first
use $\cT_X(\cdot)$ to make multi-step look-ahead prediction on the basis
time-series in $\Xtr$ as detailed in Section~\ref{sec:temporal}, to
obtained $\predXte$; then the original time-series predictions can be
obtained by $\predYte = \*F \predXte$. This model can also be
adapted to make rolling predictions \textit{without retraining}. In the case
of rolling predictions, the task is to train the model on a training period
say $\*Y[:,1:t_1]$, then make predictions on a future time period say
$\hat{\*Y}[:,t_1+1:t_2]$, then receive the actual values on the future
time-range $\*Y[:,t_1+1:t_2]$ and after incorporating these values make
further predictions for a time range further in the future
$\hat{\*Y}[:,t_2+1:t_3]$ and so on. The key challenge in this scenario, is to
incorporate the newly observed values $\*Y[:,t_1+1:t_2]$ to generate the
values of the basis time-series in that period which is $\*X[:,t_1+1:t_2]$.
We propose to obtain these values by minimizing global loss defined in
\eqref{eq:global_loss} while keeping $\*F$ and $\cT_X(\cdot)$ fixed:
\begin{align*}
\*X[:,t_1+1:t_2] 
= \argmin_{\*M \in \-R^{k \times (t_2-t_1)}} \cL_{G}\rbr{\*Y[:,
	t_1 + 1 : t_2], \*F, \*M, \cT_X}.
\end{align*}
Once we obtain $\*X[:,t_1+1:t_2]$, we can make
predictions in the next set of future time-periods $\hat{\*Y}[:,t_2+1:t_3]$.
Note that the \trmf model in~\cite{yu2016temporal} needed to be retrained from
scratch to incorporate the newly observed values. In this work retraining is
not required to achieve good performance, as we shall see in our experiments
in Section~\ref{sec:results}.

\section{More Experimental Details}
\label{sec:mresults}
We will provide more details about the experiments like the exact rolling prediction setting in each data-sets, the evaluation metrics and the model hyper-parameters.

\subsection{Rolling Prediction} 
\label{sec:rolling}
In our experiments in Section~\ref{sec:results}, we compare model performances on rolling prediction tasks. The goal in this setting is to predict future time-steps in batches as more data is revealed. Suppose the initial training time-period is $\{1,2,...,t_0\}$, rolling window size $\tau$ and number of test windows $n_w$. Let $t_i = t_0 + i\tau$. The rolling prediction task is a sequential process, where given data till last window, $\*Y[:,1:t_{i-1}]$, we predict the values for the next future window $\hat{\*Y}[:,t_{i-1}+1:t_i]$, and then the actual  values for the next window $\*Y[:,t_{i-1}+1:t_i]$ are revealed and the process is carried on for $i = 1,2,...,n_w$. The final measure of performance is the loss $\_L(\hat{\*Y}[:,t_{0}+1:t_{n_w}],\*Y[:,t_{0}+1:t_{n_w}])$ for the metric $\_L$ defined in Eq.~\eqref{eq:metric}. 

For instance, In the traffic data-set experiments we have $t_0 = 10392, \tau = 24, w = 7$ and in electricity $t_0 = 25968, \tau = 24, w = 7$. The wiki data-set experiments have the parameters $t_0 = 747, \tau = 14, w = 4$.

\subsection{Loss Metrics} 
\label{sec:metric}
The following well-known loss metrics~\cite{hyndman2008forecasting} are used in this paper. Here, $\*Y \in \-R^{n'\times t'}$ represents the actual values while  $\hat{\*Y} \in \-R^{n'\times t'}$ are the corresponding predictions.

{\bf (i) WAPE: } Weighted Absolute Percent Error is defined as follows,
\begin{align}
\label{eq:wape}
\_L(\hat{\*Y},\*Y) = \frac{\sum_{i = 1}^{n'}\sum_{j=1}^{t'} |Y_{ij} - \hat{Y}_{ij}|}{\sum_{i = 1}^{n'}\sum_{j=1}^{t'} |Y_{ij} |}.
\end{align}

{\bf (ii) MAPE: } Mean Absolute Percent Error is defined as follows,
\begin{align}
\label{eq:mape}
\_L_m(\hat{\*Y},\*Y) = \frac{1}{Z_0}\sum_{i = 1}^{n'}\sum_{j=1}^{t'} \frac{|Y_{ij} - \hat{Y}_{ij}|}{ |Y_{ij} |} \mathds{1}\{|Y_{ij}| > 0\},
\end{align}
where $Z_0 = \sum_{i = 1}^{n'}\sum_{j=1}^{t'} \mathds{1}\{|Y_{ij}| > 0\}$.

{\bf (iii) SMAPE: } Symmetric Mean Absolute Percent Error is defined as follows,
\begin{align}
\label{eq:smape}
\_L_s(\hat{\*Y},\*Y) = \frac{1}{Z_0}\sum_{i = 1}^{n'}\sum_{j=1}^{t'} \frac{2|Y_{ij} - \hat{Y}_{ij}|}{ |Y_{ij} + \hat{Y}_{ij} |} \mathds{1}\{|Y_{ij}| > 0\},
\end{align}
where $Z_0 = \sum_{i = 1}^{n'}\sum_{j=1}^{t'} \mathds{1}\{|Y_{ij}| > 0\}$.

\subsection{Model Parameters and Settings} 
\label{sec:mparams}
In this section we will describe the compared models in more details. For a TC network the important parameters are the kernel size/filter size, number of layers and number of filters/channels per layer. A network described by $[c_1,c_2,c_3]$ implies that there are three layers with $c_i$ filters in layer $i$. For, an LSTM the parameters $(n_h,n_l)$ means that the number of neurons in hidden layers is $n_h$, and number of hidden layers is $n_l$. All models are trained with early stopping with a tenacity or patience of $7$, with a maximum number of epochs $300$. The hyper-parameters for all the models are as follows,

{\bf DeepGLO: } In all the datasets, except \wiki and \pems the networks $\_T_X$ and $\_T_Y$ both have parameters $[32,32,32,32,32,1]$ and kernel size is $7$. On the \wiki dataset, we set the networks  $\_T_X$ and $\_T_Y$ with parameters $[32,32,32,32,1]$. On the \pems dataset we set the parameters as $[32,32,32,32,32,1]$ and $[16,16,16,16,16,1]$. We set $\alpha$ and $\lambda_{\_T}$ both to $0.2$ in all experiments. The rank $k$ used in \elec, \traffic, \wiki and \pems are $64,64$ ,$256$ and $64$ respectively. We use $7$ time-covariates, which includes minute of the hour, hour of the day, day of the week, day of the month, day of the year, month of the year, week of the year, all normalized in a range $[-0.5,0.5]$, which is a subset of the time-covariates used by default in the GluonTS library. 

{\bf Local TCN (\dln): } We use the setting $[32,32,32,32,32,1]$ for all datasets. 

{\bf Local DLN: }  In all the datasets, the leveled networks  have parameters $[32,32,32,32,32,1]$ and kernel size is $7$.

{\bf TRMF: } The rank $k$ used in electricity, traffic, wiki are $60,60$ ,$1024$ respectively. The lag indices are set to include the last day and the same day in the last week for traffic and electricity data. 

{\bf SVD+Leveled: } The rank $k$ used in electricity, traffic and wiki are $60,60$ and $500$ respectively. In all the datasets, the leveled networks  have parameters $[32,32,32,32,32,1]$ and kernel size is $7$.

{\bf LSTM: } In all datasets the parameters are $(45,3)$. 

{\bf DeepAR: } In all datasets, we use the default parameters in the \texttt{DeepAREstimator} in the GluonTS implementation. 

{\bf TCN: } In all the datasets, the parameters are $[32,32,32,32,32,1]$ and kernel size is $7$.

{\bf Prophet: } The parameters are selected automatically. The model is parallelized over $32$ cores. The model was run with growth = 'logistic', as this was found to perform the best. 

Note that we replicate the exact values reported in~\cite{yu2017spatio} for the STGCN models on the \pems dataset.

\end{document}